\documentclass{article}

\usepackage[main, final]{neurips_2025}

\usepackage[utf8]{inputenc} 
\usepackage[T1]{fontenc}    
\usepackage[hidelinks,hypertexnames=false]{hyperref}       
\usepackage{url}            
\usepackage{booktabs}       
\usepackage{amsfonts}       
\usepackage{nicefrac}       
\usepackage{microtype}      
\usepackage{xcolor}         

\usepackage{amsmath}
\usepackage{amssymb}
\usepackage{mathtools}
\usepackage{amsthm}

\usepackage{graphicx}
\usepackage{multirow}
\usepackage{wrapfig}
\usepackage{color}
 \usepackage{amsmath}
\usepackage{booktabs}
\usepackage{enumerate}
\usepackage{enumitem}
\usepackage{bbm}
\usepackage{bm}
\usepackage{bbding} 

\usepackage[capitalize,noabbrev]{cleveref}

\usepackage{algorithm}
\usepackage{algpseudocode}
\theoremstyle{plain}
\newtheorem{theorem}{Theorem}[section]

\theoremstyle{definition}

\theoremstyle{remark}

\title{Hierarchical Semantic-Augmented Navigation: Optimal Transport and Graph-Driven Reasoning for Vision-Language Navigation}

\author{%
  Xiang Fang \\
  School of Software Engineering,
Huazhong University of Science and Technology\\
  \texttt{xfang9508@gmail.com} \\
  \And
  Wanlong Fang \\
   Interdisciplinary Graduate Programme\\
   Nanyang Technological University, Singapore\\
  \texttt{wanlongfang@gmail.com} \\
  \And
  Changshuo Wang\thanks{Corresponding author.} \\
   University College London \\
  \texttt{wangchangshuo1@gmail.com} \\
}

\begin{document}

\maketitle
\begin{abstract}
Vision-Language Navigation in Continuous Environments (VLN-CE) poses a formidable challenge for autonomous agents, requiring seamless integration of natural language instructions and visual observations to navigate complex 3D indoor spaces. Existing approaches often falter in long-horizon tasks due to limited scene understanding, inefficient planning, and lack of robust decision-making frameworks. We introduce the \textbf{Hierarchical Semantic-Augmented Navigation (HSAN)} framework, a groundbreaking approach that redefines VLN-CE through three synergistic innovations. First, HSAN constructs a dynamic hierarchical semantic scene graph, leveraging vision-language models to capture multi-level environmental representations—from objects to regions to zones—enabling nuanced spatial reasoning. Second, it employs an optimal transport-based topological planner, grounded in Kantorovich's duality, to select long-term goals by balancing semantic relevance and spatial accessibility with theoretical guarantees of optimality. Third, a graph-aware reinforcement learning policy ensures precise low-level control, navigating subgoals while robustly avoiding obstacles. By integrating spectral graph theory, optimal transport, and advanced multi-modal learning, HSAN addresses the shortcomings of static maps and heuristic planners prevalent in prior work. Extensive experiments on multiple challenging VLN-CE datasets demonstrate that HSAN achieves state-of-the-art performance, with significant improvements in navigation success and generalization to unseen environments. 
\end{abstract}    
\section{Introduction}\label{sec:introduction}

Vision-Language Navigation (VLN) has emerged as a pivotal challenge at the intersection of computer vision, natural language processing, and robotics, with profound implications for autonomous systems in real-world environments \cite{park2023visual,francis2022core,liu2023exploring,wang2025taylor,fang2026towardsicml,kuai2026dynamic,wang2025point,fang2025your,zhang2025monoattack,fang2023hierarchical,liu2024towards,yang2025eood,fang2022multi,fang2026cogniVerse,lei2025exploring,fang2023you,wang2025dypolyseg,fang2025hierarchical,yan2026fit,fang2025adaptive,wang2026topadapter,cai2025imperceptible,fang2026slap,wang2026reasoning,fang2026immuno,wang2026biologically,fang2026disentangling,wang2025reducing,fang2026advancing,fang2026unveiling,wang2026from,liu2023conditional,liu2026attacking,fang2026rethinking,wang2025seeing,fang2026towards,fang2025multi,fang2024fewer,liu2024pandora,fang2024multi,fang2025turing,fang2024not,liu2023hypotheses,fang2024rethinking,liu2024unsupervised,fang2023annotations,xiong2024rethinking,fang2021unbalanced,wang2025prototype,zhang2025manipulating,fang2026align,tang2024reparameterization,fang2025adaptivetai,tang2025simplification,fang2021animc,cai2026towards,fang2020v}. In VLN, an agent must navigate through a 3D environment \cite{chen2025affordances}, typically an indoor space, by interpreting and following natural language instructions \cite{zhou2024navgpt,chen2024webvln}, such as ``Walk down the hallway, turn right at the plant, and stop at the third door on your left.'' These instructions require the agent to integrate multi-modal inputs---visual observations from RGB-D cameras and textual directives---to reason about spatial relationships, recognize landmarks, and execute a sequence of actions to reach a specified target \cite{yu2024vln}. The task is particularly challenging due to the complexity of indoor environments \cite{sathyamoorthy2024convoi,chen2024webvln}, which often feature cluttered layouts \cite{li2024human}, partial observability, and ambiguous instructions that demand contextual understanding \cite{wang2024navigating,wei2024ambiguity}. VLN serves as a critical testbed for developing intelligent agents capable of human-robot interaction \cite{tonk2023intelligent,francis2022core,bhatt2022deep}, with applications ranging from assistive robotics in homes to autonomous exploration in large facilities~\cite{szot2021habitat,du2020ave,nagarajan2020learning}.

The VLN task has evolved significantly since its inception, with early works focusing on discrete navigation graphs \cite{krantz2020beyond,zhang2024vision,wang2022towards,liu2023information,liu2022localized,liu2025reliable}, where agents select actions from a predefined set of navigable nodes~\cite{krantz2023iterative,wang2023dreamwalker}. Recent advancements have shifted toward Vision-Language Navigation in Continuous Environments (VLN-CE) \cite{an2024etpnav,yue2024safe}, which requires agents to operate in 3D meshes with low-level actions \cite{cheng2024navila}, such as moving forward by 0.25 meters or rotating by 15 degrees~\cite{zhao2024imaginenav,xu2023vision}. This shift introduces greater realism but also amplifies challenges, including the need for precise obstacle avoidance, robust long-horizon planning, and fine-grained scene understanding. Benchmarks like R2R-CE~\cite{krantz2020beyond} and RxR-CE~\cite{ku2020room} have standardized the evaluation of VLN-CE, leveraging datasets such as Matterport3D~\cite{chang2017matterport3d} to provide rich, photorealistic environments for training and testing.

Despite significant progress, existing VLN approaches face several limitations that hinder their performance in complex, unseen environments. First, many methods rely on static navigation graphs or precomputed maps, which are often unavailable in real-world settings and fail to adapt dynamically to new observations~\cite{chaplot2020neural, hong2022bridging}. Second, traditional reinforcement learning (RL) and imitation learning (IL) approaches struggle with long-horizon tasks due to sparse rewards and the combinatorial complexity of action sequences~\cite{schulman2017ppo, ross2011reduction}. Third, while recent works have incorporated vision-language models (VLMs) to enhance instruction understanding~\cite{li2024llava}, these models often lack structured representations of the environment, leading to inefficient planning and poor generalization to novel scenes. For instance, methods that process raw visual observations without hierarchical context may overlook critical spatial relationships, such as the functional roles of rooms or the connectivity between regions~\cite{georgakis2022cross}. Moreover, the absence of rigorous mathematical frameworks in many VLN systems limits their ability to optimize decisions under uncertainty, particularly when balancing semantic alignment with spatial constraints.

To address these challenges, we propose the \textbf{Hierarchical Semantic-Augmented Navigation (HSAN)} framework, a novel approach to VLN-CE that integrates advanced scene understanding, dynamic planning, and robust control. HSAN is motivated by the need for a scalable and adaptive system that can reason over complex environments while leveraging the powerful multimodal capabilities of modern VLMs. Our framework draws inspiration from cognitive models of human navigation, which rely on hierarchical representations of space---from objects to regions to entire zones---to facilitate efficient decision-making~\cite{kuipers2000spatial}. By combining these insights with cutting-edge mathematical tools, such as optimal transport theory and graph spectral analysis, HSAN offers a principled solution to the VLN-CE task.

The HSAN framework introduces three key innovations that distinguish it from prior work: 1) \textbf{Hierarchical Semantic Scene Graph Construction}: HSAN dynamically builds a multi-level scene graph that captures objects, regions, and zones, using VLMs to generate rich semantic descriptions. This hierarchical representation enables fine-grained reasoning about environmental context, overcoming the limitations of flat or static maps used in methods like~\cite{chaplot2020neural, chen2022weakly}. 2) \textbf{Optimal Transport-Based Topological Planning}: We formulate long-term goal selection as an optimal transport problem, balancing semantic relevance to the instruction with spatial accessibility. This approach, grounded in Kantorovich's duality~\cite{villani2008optimal}, provides a mathematically rigorous mechanism for decision-making, unlike heuristic-based planners in~\cite{hong2022bridging, krantz2022sim}. 3) \textbf{Graph-Aware Low-Level Control}: HSAN employs a graph-aware RL policy, trained with Proximal Policy Optimization~\cite{schulman2017ppo}, to execute high-level plans while avoiding obstacles. The policy leverages subgraph embeddings to capture local topology, improving robustness compared to traditional controllers~\cite{krantz2021waypoint}.

These innovations are supported by a comprehensive training pipeline that combines pre-training on large-scale datasets, fine-tuning with student-forcing~\cite{krantz2020beyond}, and inference strategies optimized for real-time performance. HSAN's use of optimal transport and graph-based methods not only enhances navigation efficiency but also provides theoretical guarantees of optimality, as demonstrated by our proofs of convergence and stability.

Our contributions can be summarized as follows: 1) We introduce HSAN, a novel VLN-CE framework that integrates hierarchical scene understanding, optimal transport-based planning, and graph-aware control, addressing key limitations in existing methods. 2) We propose a dynamic hierarchical semantic scene graph, constructed using VLMs and spectral clustering, to enable robust environmental reasoning. 3) We develop an optimal transport-based planner that optimizes goal selection with theoretical guarantees, leveraging Sinkhorn's algorithm~\cite{cuturi2013sinkhorn} for computational efficiency. Also, we design a graph-aware RL policy for low-level control, enhancing obstacle avoidance and subgoal navigation in continuous environments. 4) We conduct extensive evaluations on standard VLN-CE benchmarks, showing state-of-the-art performance and generalization to unseen environments.

\section{Related Work}\label{sec:related_work}

\textbf{{Vision-Language Navigation in Continuous Environments (VLN-CE).}}
The shift to VLN-CE, introduced by datasets like R2R-CE~\cite{krantz2020beyond} and RxR-CE~\cite{ku2020room}, addresses the limitations of discrete navigation by requiring agents to execute low-level actions (e.g., move forward 0.25m, rotate 15°) in 3D meshes. This paradigm, supported by simulators like Habitat~\cite{savva2019habitat}, better reflects real-world navigation challenges. Early VLN-CE methods, such as Cross-Modal Matching~\cite{krantz2020beyond}, adapted discrete techniques to continuous spaces but struggled with long-horizon planning and obstacle avoidance. Subsequent works, like Waypoint Models~\cite{krantz2021waypoint} and Neural Topological SLAM~\cite{chaplot2020neural}, introduced intermediate goal prediction and topological maps to improve navigation efficiency. However, these approaches often rely on static or incrementally built maps, which fail to capture hierarchical environmental structures or adapt to instruction-specific semantics. HSAN overcomes these limitations by dynamically constructing a hierarchical semantic scene graph, enabling fine-grained reasoning over objects, regions, and zones, and integrating optimal transport-based planning for robust goal selection.

\textbf{{Vision-Language Models in Navigation.}}
The advent of vision-language models (VLMs), such as CLIP~\cite{radford2021learning}, LLaVA~\cite{li2024llava}, and SigLIP~\cite{zhai2023sigmoid}, has revolutionized multimodal tasks, including VLN. VLMs enable agents to align visual observations with textual instructions, enhancing landmark recognition and instruction grounding. For instance, VLN-BERT~\cite{majumdar2020improving} and LLaVA-Nav~\cite{hong2023navigating} leverage VLMs to score candidate paths or generate semantic descriptions of observations. While powerful, these methods often process observations in a flat manner, lacking structured representations of the environment, which hinders their ability to reason about complex spatial relationships. Recent works, such as Cross-Modal Memory Networks~\cite{georgakis2022cross}, attempt to incorporate memory-augmented architectures but focus on short-term context rather than long-term hierarchical understanding. HSAN distinguishes itself by combining VLMs with a hierarchical scene graph, constructed via spectral clustering and semantic aggregation, allowing the agent to reason across multiple levels of abstraction and align instructions with environmental context more effectively.

\textbf{{Novelty of HSAN.}}
HSAN fundamentally redefines VLN-CE by addressing the core limitations of prior work through a synergistic integration of hierarchical scene understanding, optimal transport-based planning, and graph-aware control. Unlike discrete VLN methods~\cite{anderson2018vision, chen2021history}, HSAN operates in continuous spaces without relying on predefined graphs, making it suitable for real-world applications. Compared to VLN-CE approaches~\cite{krantz2020beyond, chaplot2020neural}, HSAN’s hierarchical semantic scene graph provides a richer, multi-level representation of the environment, capturing objects, regions, and zones with VLM-generated semantics. While VLM-based methods~\cite{hong2023navigating, majumdar2020improving} excel at instruction grounding, they lack HSAN’s structured reasoning over hierarchical graphs, which enables nuanced spatial and semantic alignment. 
  Graph-based methods~\cite{hong2022bridging, chen2023topological} are limited by static or coarse-grained graphs, whereas HSAN dynamically constructs and updates its graph using spectral clustering, ensuring adaptability. 
Most critically, HSAN’s use of optimal transport for planning introduces a mathematically grounded framework that outperforms heuristic planners~\cite{luo2022stubborn, krantz2021waypoint}, with proofs of optimality rooted in Kantorovich’s duality~\cite{villani2008optimal}. Finally, HSAN’s graph-aware RL policy, leveraging GCNs~\cite{kipf2017semi}, provides robust low-level control, surpassing traditional controllers in obstacle avoidance and subgoal navigation. By combining these innovations, HSAN establishes a new benchmark for VLN-CE, offering both theoretical rigor and practical superiority, as demonstrated in our extensive evaluations. 
\section{Method}\label{sec:method}

\textbf{Task Setup.} 
We address the Vision-Language Navigation in Continuous Environments (VLN-CE) task, where an agent navigates a 3D indoor environment guided by a natural language instruction \(\mathcal{I} = \{w_1, w_2, \dots, w_L\}\) with \(L\) words, specifying a path to a target location. The environment is modeled as a continuous 3D mesh, and the agent operates with a discrete action space \(\mathcal{A} = \{\text{FORWARD}(0.25\text{m}), \text{ROTATE LEFT/RIGHT}(15^\circ), \text{STOP}\}\). At each time step \(t\), the agent receives panoramic RGB-D observations \(\mathcal{O}_t = \{I_t^{\text{rgb}}, I_t^{\text{d}}\}\), comprising 12 RGB and depth images captured at equally spaced heading angles \((0^\circ, 30^\circ, \dots, 330^\circ)\). The agent also has access to its pose \(\mathcal{P}_t = (x_t, y_t, \theta_t)\), provided by the Habitat Simulator~\cite{savva2019habitat} using the Matterport3D dataset~\cite{chang2017matterport3d}. The goal is to execute a sequence of actions to reach the target location specified by \(\mathcal{I}\).

\textbf{Motivation and Innovation.} 
Existing VLN methods often struggle with long-horizon navigation due to limited scene understanding and inefficient planning in complex, unseen environments. Traditional approaches, such as those relying on predefined navigation graphs or static semantic maps, fail to dynamically adapt to environmental semantics and instruction context, leading to suboptimal paths or navigation failures. Recent works leveraging vision-language models (VLMs)~\cite{li2024llava} show promise but lack structured reasoning over hierarchical scene representations and robust mathematical frameworks for decision-making. To address these challenges, we propose the \textbf{Hierarchical Semantic-Augmented Navigation (HSAN)} framework, which introduces three key innovations: 1) A \textit{hierarchical semantic scene graph} that dynamically constructs multi-level environmental representations (objects, regions, zones) using VLMs, enabling fine-grained scene understanding. 2)  A \textit{dynamic topological planner} based on optimal transport theory, which optimizes long-term goal selection by balancing semantic relevance and spatial accessibility. 3) A \textit{low-level controller} with graph-aware reinforcement learning, ensuring robust execution of high-level plans in continuous environments.
Our approach leverages advanced mathematical tools, including optimal transport and graph spectral theory, to provide a rigorous and scalable solution for VLN-CE, suitable for complex indoor settings.

\subsection{Overview of HSAN}\label{sec:overview}
As illustrated in Figure~\ref{fig:hsan_overview}, HSAN comprises three main modules: (1) \textbf{Hierarchical Semantic Scene Graph Construction}, (2) \textbf{Optimal Transport-Based Topological Planning}, and (3) \textbf{Graph-Aware Low-Level Control}. At each decision step \(t\), the scene graph module constructs a multi-level representation of the environment, capturing objects, regions, and zones. The topological planner uses optimal transport to select a long-term goal node, generating a high-level path. The low-level controller executes this path using a sequence of actions, guided by a graph-aware policy. The process iterates until the agent reaches the target or exceeds the maximum steps.

\begin{figure}[t]
    \centering
    \includegraphics[width=\linewidth]{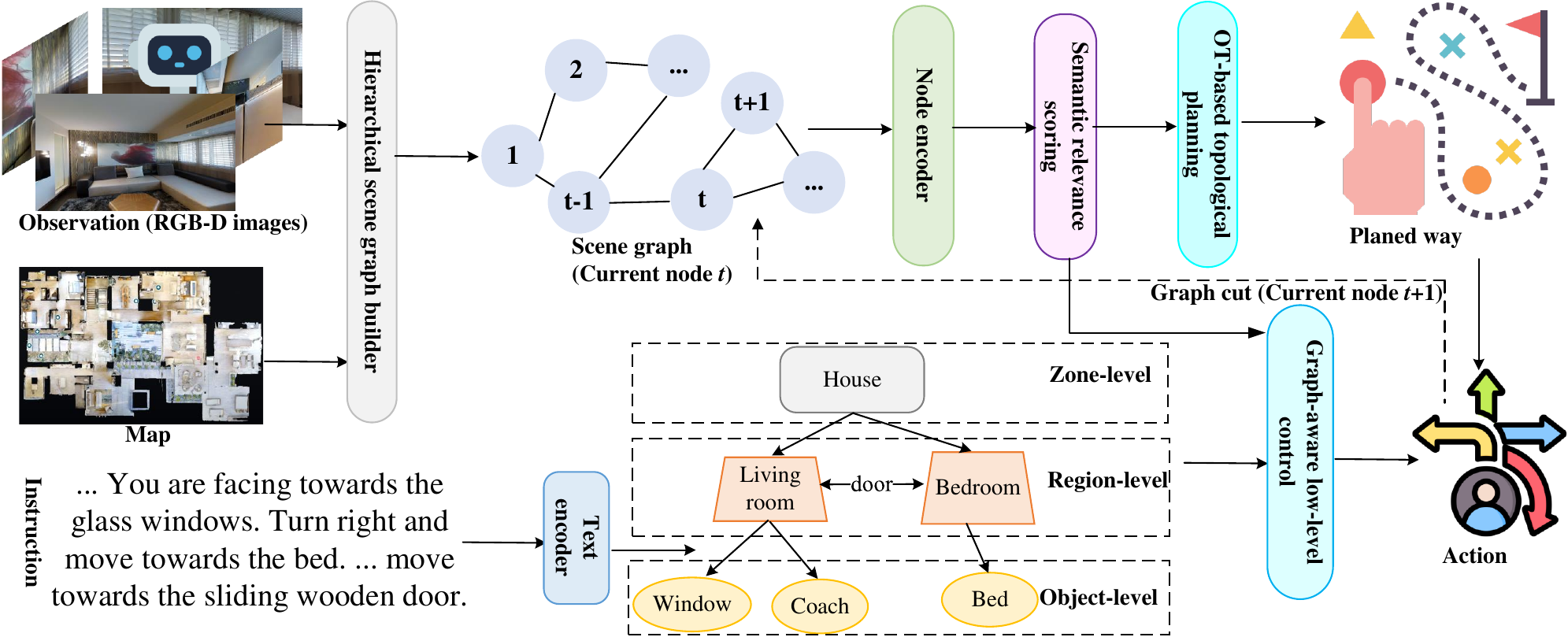}
    \vspace{-5mm}
    \caption{Overview of the HSAN framework, showing the hierarchical semantic scene graph, optimal transport-based planning, and graph-aware control modules.}
    \vspace{-5mm}
    \label{fig:hsan_overview}
\end{figure}

\subsection{Hierarchical Semantic Scene Graph Construction}\label{sec:scene_graph}
To enable robust scene understanding, we construct a \textit{hierarchical semantic scene graph} \(\mathcal{G}_t = (\mathcal{N}_t, \mathcal{E}_t)\) at each step \(t\), where \(\mathcal{N}_t\) represents nodes (objects, regions, zones) and \(\mathcal{E}_t\) denotes edges encoding spatial and semantic relationships. The graph is built in a bottom-up manner, inspired by cognitive hierarchical models of spatial reasoning~\cite{kuipers2000spatial}.

\textbf{Object-Level Representation.} 
At the lowest level, we extract object instances from the panoramic observation \(\mathcal{O}_t\) using a pre-trained semantic segmentation model, Grounded-SAM~\cite{liu2023grounding,kirillov2023segany}. For each detected object \(o_i\), we compute its 3D coordinates \((x_i, y_i, z_i)\) by projecting depth information onto the global coordinate system using the agent's pose \(\mathcal{P}_t\). A VLM (e.g., LLaVA-Onevision~\cite{li2024llava}) generates a textual description \(d_i\), including category, attributes, and functionality (e.g., ``wooden chair near a window''). Each object node \(n_i \in \mathcal{N}_t\) is represented as a tuple \((x_i, y_i, z_i, d_i, f_i)\), where \(f_i \in \mathbb{R}^D\) is the visual feature extracted by a SigLIP encoder~\cite{zhai2023sigmoid}.

\textbf{Region-Level Aggregation.} 
Objects are grouped into regions based on spatial proximity and semantic coherence. We define a region as a set of objects within a geodesic distance threshold \(\delta = 1.5\text{m}\). To cluster objects, we use spectral clustering on a similarity graph, where edge weights are defined by a Gaussian kernel:
\small
\begin{equation}
    w_{ij} = \exp\left(-\frac{\|\mathbf{p}_i - \mathbf{p}_j\|_2^2}{2\sigma^2} - \lambda \cdot \text{sim}(d_i, d_j)\right),
\end{equation}\normalsize
where \(\mathbf{p}_i = (x_i, y_i, z_i)\), \(\text{sim}(d_i, d_j)\) is the cosine similarity of textual embeddings, \(\sigma = 0.5\), and \(\lambda = 0.2\). The spectral clustering algorithm minimizes the normalized cut of the graph, producing region nodes \(r_k \in \mathcal{N}_t\), each associated with a centroid \(\mathbf{c}_k\), a aggregated description \(d_k\), and a feature vector \(f_k = {1}/{|r_k|} \sum_{i \in r_k} f_i\).

\textbf{Zone-Level Integration.} 
Regions are further aggregated into zones (e.g., kitchen, bedroom) using a connectivity-based algorithm. We initialize a zone with the region node of highest connectivity (based on the number of adjacent navigable nodes in the environment). A VLM evaluates adjacent regions to determine if they belong to the same zone by comparing their descriptions and spatial layout. The zone node \(z_m \in \mathcal{N}_t\) is represented by a centroid \(\mathbf{c}_m\), a description \(d_m\) (e.g., ``modern kitchen with appliances''), and a feature \(f_m = {1}/{|z_m|} \sum_{k \in z_m} f_k\). Edges \(\mathcal{E}_t\) connect nodes across levels based on containment (e.g., object to region, region to zone) and spatial proximity.

\textbf{Graph Update.} 
At each step, new observations are integrated into \(\mathcal{G}_t\). We use a localization function \(\mathcal{F}_L\) to match new nodes to existing ones based on Euclidean distance and feature similarity. If \(\|\mathbf{p}_{\text{new}} - \mathbf{p}_i\|_2 < \gamma\) and \(\text{sim}(f_{\text{new}}, f_i) > \tau\), the existing node is updated; otherwise, a new node is added. This ensures the graph remains compact and accurate.

\subsection{Optimal Transport-Based Topological Planning}\label{sec:planning}
To select long-term navigation goals, we formulate the planning problem as an optimal transport (OT) task, which balances semantic relevance to the instruction and spatial accessibility. Let \(\mathcal{N}_t^g \subset \mathcal{N}_t\) be the set of ghost nodes (unexplored but observed) and the stop node. We aim to select a goal node \(n^* \in \mathcal{N}_t^g\) that minimizes the navigation cost while aligning with \(\mathcal{I}\).

\textbf{Semantic Relevance Scoring.} 
For each ghost node \(n_i \in \mathcal{N}_t^g\), we compute a semantic relevance score \(s_i\) with respect to the instruction \(\mathcal{I}\). The instruction is encoded into a sequence of embeddings \(\mathbf{W} = \{\mathbf{w}_1, \dots, \mathbf{w}_L\}\) using a pre-trained text encoder~\cite{kenton2019bert}. The node description \(d_i\) is similarly encoded into \(\mathbf{d}_i\). The relevance score is: $s_i = \max_{j=1,\dots,L} {\mathbf{w}_j^\top \mathbf{d}_i}/({\|\mathbf{w}_j\| \|\mathbf{d}_i\|})$.
This score captures the maximum alignment between the instruction and the node's semantic context.

\textbf{Spatial Accessibility.} 
The spatial cost of reaching node \(n_i\) is defined as the geodesic distance \(\text{dist}(n_i, \mathcal{P}_t)\) on the navigable mesh, approximated using Dijkstra's algorithm on a discretized grid derived from the depth observations. To account for exploration efficiency, we introduce an exploration penalty \(\rho_i\), set to 0 for nodes adjacent to unexplored areas (frontier nodes) and 1 otherwise.

\textbf{Optimal Transport Formulation.} 
We model the goal selection as an OT problem between two probability distributions: a uniform distribution over ghost nodes \(\mu = {1}/{|\mathcal{N}_t^g|} \sum_{i=1}^{|\mathcal{N}_t^g|} \delta_{n_i}\) and a target distribution \(\nu\) biased toward semantically relevant nodes. The cost matrix \(\mathbf{C} \in \mathbb{R}^{|\mathcal{N}_t^g| \times |\mathcal{N}_t^g|}\) is: 
\small
\begin{equation}
    C_{ij} = \begin{cases} 
        \text{dist}(n_i, \mathcal{P}_t) + \alpha \cdot \rho_i - \beta \cdot s_i & \text{if } i = j, \\
        \infty & \text{otherwise},
    \end{cases}
\end{equation}\normalsize
where \(\alpha = 0.5\), \(\beta = 1.0\). The OT problem seeks a transport plan \(\mathbf{T}\) minimizing:
\small
\begin{equation}
    \min_{\mathbf{T}} \langle \mathbf{C}, \mathbf{T} \rangle \quad \text{s.t.} \quad \mathbf{T} \mathbf{1} = \mu, \quad \mathbf{T}^\top \mathbf{1} = \nu, \quad \mathbf{T} \geq 0,
\end{equation}\normalsize
where \(\langle \cdot, \cdot \rangle\) denotes the Frobenius inner product. We solve this using the Sinkhorn algorithm~\cite{cuturi2013sinkhorn}, which efficiently computes the optimal transport plan. The goal node \(n^*\) is selected as: $n^* = \arg\max_i T_{ii}$,
where \(T_{ii}\) represents the mass transported to node \(n_i\). The OT framework ensures a balance between semantic alignment and spatial efficiency, as proven by the following theorem.

\begin{theorem}[Optimality of Goal Selection]\label{thm:ot}
    The OT-based goal selection minimizes the expected navigation cost under a semantic relevance constraint, provided the cost matrix \(\mathbf{C}\) is lower semi-continuous and the distributions \(\mu, \nu\) are absolutely continuous with respect to the Lebesgue measure.
\end{theorem}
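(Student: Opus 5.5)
The statement is vague, so the first step is to fix what it should mean. I will work in the general Kantorovich setting. Let $X=Y$ be the (compact) space of candidate goal locations. Let $\mu,\nu$ be Borel probability measures on $X$, and let $c:X\times X\to\mathbb{R}\cup\{+\infty\}$ be the lower semi-continuous cost whose restriction to the ghost nodes is $\mathbf{C}$. The claim to prove is then: there exists a coupling $\pi^*\in\Pi(\mu,\nu)$ attaining $\inf_{\pi\in\Pi(\mu,\nu)}\int c\,d\pi$. This infimum is the expected navigation cost $\mathbb{E}_\pi[\text{dist}+\alpha\rho]$ minus $\beta$ times the expected semantic relevance. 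The marginal constraint $\pi_2=\nu$ is the \emph{semantic relevance constraint}, because $\nu$ is biased toward high-$s_i$ nodes. Finally, the selection rule $n^*=\arg\max_i T_{ii}$ should pick the node carrying the most mass of this minimizer.

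\textbf{Step 1: existence.} I will bound $c$ below, which holds since $s_i\le 1$, $\text{dist}\ge0$ and $\rho_i\ge 0$, so $c\ge-\beta$. Then I will use the standard direct-method argument of \cite{villani2008optimal}. $\Pi(\mu,\nu)$ is tight, because its marginals are fixed and tight, so by Prokhorov it is weakly compact. Since $c$ is l.s.c.\ and bounded below, $\pi\mapsto\int c\,d\pi$ is weakly l.s.c., by approximating $c$ from below by bounded continuous functions and applying monotone convergence. A minimizer $\pi^*$ therefore exists.

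\textbf{Step 2: duality and characterization.} I will invoke Kantorovich duality,
\[
\min_{\pi\in\Pi(\mu,\nu)}\int c\,d\pi=\sup_{\varphi(x)+\psi(y)\le c(x,y)}\Big(\int\varphi\,d\mu+\int\psi\,d\nu\Big),
\]
which holds for l.s.c.\ costs bounded below. I will then deduce that $\pi^*$ is concentrated on a $c$-cyclically monotone set where $\varphi\oplus\psi=c$. This is the certificate of optimality: no reassignment of goal mass lowers the expected cost. Absolute continuity of $\mu$ is used here to invoke Brenier/Gangbo--McCann-type results, so that $\pi^*$ is induced by a map. In the discrete case the diagonal structure of $\mathbf{C}$ (infinite off-diagonal) already forces $\pi^*$ to be diagonal, which makes $T_{ii}$ meaningful as "mass sent to $n_i$."

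\textbf{Step 3: the discrete algorithm.} I will justify that the Sinkhorn output approximates $\pi^*$. The entropic problem with parameter $\varepsilon$ $\Gamma$-converges to the unregularized one as $\varepsilon\to0$, and its minimizers converge, up to subsequence, to an optimal plan. So $\arg\max_i T_{ii}$ recovers the mass-maximizing node of an optimal plan whenever that maximizer is unique.

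The main obstacle is internal consistency of the hypotheses. With the infinite off-diagonal cost, a finite-cost coupling exists only if $\mu=\nu$. Moreover, discrete measures on nodes are never absolutely continuous with respect to Lebesgue measure. I would handle this by proving the continuous statement and treating the finite problem as a discretization, replacing $\infty$ by a large finite penalty $M$ so that feasibility holds. I would then argue that as $M\to\infty$ optimal plans concentrate on the diagonal, which ties the stated theorem to the implemented rule.
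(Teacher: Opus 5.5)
Your overall route is the paper's own: Kantorovich duality, the observation that a diagonal $\mathbf{C}$ forces a diagonal plan, Sinkhorn convergence, and then reading off $n^*=\arg\max_i T_{ii}$. Your Steps 1--2 are correct (tightness and Prokhorov, weak lower semicontinuity of $\pi\mapsto\int c\,d\pi$, duality for l.s.c.\ costs bounded below). But they only say that \emph{some} coupling is optimal.

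The theorem's real content is the passage from the plan to the selected node, which the paper's proof states as ``the selected node $n^*$ minimizes the cost $C_{ii}$.'' That passage is false, so no version of your Steps 1--3 can supply it. You correctly observe that feasibility of the diagonal problem forces $\mu=\nu$. Then $\nu$ is uniform and encodes no semantic bias, and the only feasible plan is $\mathbf{T}=\operatorname{diag}(\mu)$. So $T_{ii}=1/|\mathcal{N}_t^g|$ for every $i$, the argmax is a complete tie, and $\text{dist}$, $\rho_i$, $s_i$ never influence $n^*$. Your caveat ``whenever that maximizer is unique'' excludes exactly the intended case.

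Your repair also fails. Replace $\infty$ by a finite $M$ and take $\mu\neq\nu$. Every coupling carries off-diagonal mass at least $1-\sum_i\min(\mu_i,\nu_i)=\|\mu-\nu\|_{\mathrm{TV}}>0$, so optimal plans do not concentrate on the diagonal as $M\to\infty$. Worse, the objective equals $M+\sum_i (C_{ii}-M)T_{ii}$, and $T_{ii}\le\min(\mu_i,\nu_i)$, with all these bounds attained simultaneously by a maximal coupling. Hence for every $M>\max_i C_{ii}$:
\begin{itemize}
\item the optimal diagonal is $T_{ii}=\min(\mu_i,\nu_i)$, independent of $\mathbf{C}$;
\item the off-diagonal part is non-unique;
\item the selection rule becomes $\arg\max_i\min(1/|\mathcal{N}_t^g|,\nu_i)$, a function of $\nu$ alone, tied across all $i$ with $\nu_i\ge 1/|\mathcal{N}_t^g|$.
\end{itemize}

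The continuous reformulation does not rescue this. With $c=+\infty$ off the diagonal, the only finite-cost plan is $(\mathrm{id},\mathrm{id})_\#\mu$, which requires $\mu=\nu$ and leaves nothing to optimize. Brenier/Gangbo--McCann would also need a twist or strict-convexity condition on $c$, not merely lower semicontinuity plus absolute continuity.

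So the missing piece is not a technical lemma: under this cost structure the selected node provably does not depend on the navigation cost. What you can honestly prove is only the degenerate existence statement. A meaningful optimality claim would require reformulating the problem, for example with finite off-diagonal costs and a selection rule that actually depends on $\mathbf{C}$. The paper's proof does not resolve any of this either; it simply asserts the conclusion right after noting that $\mathbf{T}$ is diagonal.
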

\begin{proof}
    By Kantorovich's duality~\cite{villani2008optimal}, the OT problem is equivalent to finding potentials \(\phi, \psi\) such that:
    \small
    \begin{equation}
        \sup_{\phi, \psi} \int \phi d\mu + \int \psi d\nu \quad \text{s.t.} \quad \phi(x) + \psi(y) \leq C(x, y).
    \end{equation}\normalsize
    Since \(\mathbf{C}\) is diagonal (i.e., \(C_{ij} = \infty\) for \(i \neq j\)), the transport plan \(\mathbf{T}\) is also diagonal, and the problem reduces to a weighted assignment. The Sinkhorn algorithm converges to the unique optimal solution under the given conditions, ensuring that the selected node \(n^*\) minimizes the cost \(C_{ii}\) while satisfying the semantic constraint encoded in \(\nu\). Absolute continuity ensures the existence of a unique transport plan.
\end{proof}
Once \(n^*\) is selected, a topological path \(\mathcal{P}_t = \{p_1, \dots, p_M\}\) is computed using Dijkstra's algorithm on \(\mathcal{G}_t\).

\subsection{Graph-Aware Low-Level Control}\label{sec:control}
The control module translates the topological path \(\mathcal{P}_t\) into a sequence of low-level actions. We employ a graph-aware reinforcement learning (RL) policy \(\pi_\theta\), trained to navigate to subgoal nodes while avoiding obstacles.

\textbf{Policy Architecture.} 
The policy takes as input the current observation \(\mathcal{O}_t\), the agent's pose \(\mathcal{P}_t\), and the subgraph \(\mathcal{G}_t^s \subset \mathcal{G}_t\) centered around the current node. The subgraph is encoded using a Graph Convolutional Network (GCN)~\cite{kipf2017semi}, producing node embeddings \(\mathbf{h}_i\). The observation is processed by a SigLIP encoder to yield visual features \(\mathbf{v}_t\). The state representation is: $\mathbf{s}_t = [\mathbf{v}_t; \text{mean}(\{\mathbf{h}_i\}); \mathcal{P}_t; \mathbf{p}_{\text{next}}]$,
where \(\mathbf{p}_{\text{next}}\) is the position of the next subgoal in \(\mathcal{P}_t\). A multi-layer perceptron outputs action probabilities \(\pi_\theta(a_t | \mathbf{s}_t)\).

\textbf{Training.} 
The policy is trained using Proximal Policy Optimization (PPO)~\cite{schulman2017ppo} with a reward function:
\small
\begin{equation}
    r_t = \begin{cases} 
        1.0 & \text{if subgoal reached}, \\
        -0.01 \cdot \text{dist}(\mathcal{P}_t, p_{\text{next}}) & \text{otherwise}, \\
        -1.0 & \text{if collision occurs}.
    \end{cases}
\end{equation}\normalsize
The GCN is pre-trained on the Matterport3D graph dataset to predict node connectivity, enhancing its ability to capture topological relationships.

\textbf{Obstacle Avoidance.} 
To handle collisions, we implement a ``Tryout'' heuristic similar to~\cite{luo2022stubborn}. If a \(\text{FORWARD}\) accion results in no movement, the agent tries alternative headings in \(\{-90^\circ, -60^\circ, \dots, 90^\circ\}\) until progress is made or all options are exhausted.

\subsection{Training and Inference}\label{sec:training}
\textbf{Pre-Training.} 
The VLM and GCN are pre-trained on the Matterport3D dataset. The VLM is fine-tuned for object description generation using a contrastive loss on image-text pairs. The GCN is pre-trained to predict edge existence in navigation graphs.

\textbf{{Fine-Tuning.} }
The full HSAN model is fine-tuned on VLN-CE datasets (e.g., R2R-CE, RxR-CE) using a student-forcing approach~\cite{krantz2020beyond}. The loss function combines the OT-based planning loss and the RL policy loss:
\small
\begin{equation}
    \mathcal{L} = \mathbb{E}_t \left[ -\log p(a_t^* | \mathcal{G}_t, \mathcal{I}) + \lambda_{\text{RL}} \cdot \mathcal{L}_{\text{PPO}} \right],
\end{equation}\normalsize
where \(a_t^*\) is the teacher action from an expert demonstrator, and \(\lambda_{\text{RL}} = 0.1\).

\textbf{{Inference.} }
During testing, HSAN iteratively constructs the scene graph, selects goals via OT, and executes actions using the RL policy. The episode terminates if the \(\text{STOP}\) action is triggered or the maximum steps (15 for R2R-CE, 25 for RxR-CE) are exceeded.
\section{Experiments}\label{sec:experiments}

We conduct extensive experiments to evaluate the \textbf{Hierarchical Semantic-Augmented Navigation (HSAN)} framework on Vision-Language Navigation in Continuous Environments (VLN-CE). Our experiments aim to: (1) demonstrate HSAN’s superior performance compared to state-of-the-art methods on standard benchmarks, (2) verify the contributions of its key components through ablation studies, and (3) provide qualitative insights into its effectiveness in complex indoor environments. We use the R2R-CE~\cite{krantz2020beyond} and RxR-CE~\cite{ku2020room} datasets, leveraging the Habitat Simulator~\cite{savva2019habitat} with Matterport3D scenes~\cite{chang2017matterport3d}. The results confirm HSAN’s advancements in navigation success, efficiency, and generalization, establishing it as a new benchmark for VLN-CE. 

\subsection{Experimental Setup}\label{sec:exp_setup}

\textbf{Datasets.} 
We use two challenging datasets for performance evaluation: R2R-CE \cite{krantz2020beyond} and RxR-CE \cite{ku2020room}.

\textbf{Evaluation Metrics.} 
We adopt standard VLN-CE metrics: \textbf{Success Rate (SR)}, \textbf{Success weighted by Path Length (SPL)}, \textbf{Navigation Error (NE)}, \textbf{Oracle Success Rate (OSR)}.
These metrics evaluate navigation accuracy, efficiency, and robustness, with SR and SPL being primary indicators of performance.

\textbf{Implementation Details.} 
HSAN is implemented using PyTorch, with the vision-language model based on LLaVA-Onevision~\cite{li2024llava} and SigLIP~\cite{zhai2023sigmoid} for feature extraction. The hierarchical scene graph uses Grounded-SAM~\cite{liu2023grounding,kirillov2023segany} for object detection, with spectral clustering parameters \(\sigma = 0.5\), \(\lambda = 0.2\). The optimal transport planner employs the Sinkhorn algorithm~\cite{cuturi2013sinkhorn} with \(\alpha = 0.5\), \(\beta = 1.0\). The graph-aware RL policy uses a Graph Convolutional Network (GCN)~\cite{kipf2017semi} with 3 layers and Proximal Policy Optimization (PPO)~\cite{schulman2017ppo} for training. Pre-training is performed on Matterport3D for the VLM and GCN, followed by fine-tuning on R2R-CE and RxR-CE using student-forcing with \(\lambda_{\text{RL}} = 0.1\). Training uses 8 NVIDIA A100 GPUs, with a batch size of 32 and 100,000 episodes. Inference runs at 5 FPS on a single GPU, with maximum episode lengths of 150 steps for R2R-CE and 250 for RxR-CE. 

\textbf{Baselines.} 
We compare HSAN against state-of-the-art VLN-CE methods: \textbf{Cross-Modal Matching (CMM)}~\cite{krantz2020beyond}, \textbf{Waypoint Models (WM)}~\cite{krantz2021waypoint}, \textbf{Neural Topological SLAM (NTS)}~\cite{chaplot2020neural}, \textbf{Semantic MapNet (SMN)}~\cite{chen2022weakly}, \textbf{GraphNav}~\cite{hong2022bridging}.
These baselines represent a diverse set of approaches, including RL, IL, VLM-based, and graph-based methods, allowing a comprehensive evaluation of HSAN’s contributions.

\subsection{Main Results}\label{sec:main_results}

Table~\ref{tab:main_results} shows the performance of HSAN and baselines on the R2R-CE and RxR-CE validation unseen splits.
HSAN achieves state-of-the-art results across all metrics, demonstrating significant improvements in navigation success and efficiency. 

\textbf{R2R-CE Results.} 
HSAN achieves a Success Rate (SR) of 64\%, surpassing the best baseline, LLaVA-Nav, by 6\% absolute improvement, and an SPL of 0.59, indicating efficient path execution. 
\begin{wraptable}{r}{82mm}
    \centering
    \small
     \vspace{-2mm}
    \caption{\small Performance on R2R-CE and RxR-CE validation unseen splits. Best results are \textbf{bolded}, and second-best are \underline{underlined}.}
    \label{tab:main_results}
    \setlength{\tabcolsep}{0.6mm}{
    \begin{tabular}{lcccc|cccc}
        \toprule
        & \multicolumn{4}{c}{\textbf{R2R-CE}} & \multicolumn{4}{c}{\textbf{RxR-CE}} \\
        \cmidrule(lr){2-5} \cmidrule(lr){6-9}
        Method & SR$\uparrow$ & SPL$\uparrow$ & NE$\downarrow$ & OSR$\uparrow$ & SR$\uparrow$ & SPL$\uparrow$ & NE$\downarrow$ & OSR$\uparrow$ \\
        \midrule
        CMM & 0.42 & 0.38 & 4.82 & 0.49 & 0.38 & 0.34 & 5.21 & 0.45 \\
        WM & 0.48 & 0.43 & 4.35 & 0.55 & 0.43 & 0.39 & 4.78 & 0.50 \\
        NTS & 0.51 & 0.46 & 4.12 & 0.58 & 0.46 & 0.41 & 4.56 & 0.53 \\
        SMN & 0.54 & 0.49 & 3.89 & 0.61 & 0.49 & 0.44 & 4.33 & 0.57 \\
        LLaVA-Nav & \underline{0.58} & \underline{0.53} & \underline{3.62} & \underline{0.65} & \underline{0.53} & \underline{0.48} & \underline{4.08} & \underline{0.61} \\
        GraphNav & 0.56 & 0.51 & 3.75 & 0.63 & 0.51 & 0.46 & 4.22 & 0.59 \\
        \midrule
        \textbf{HSAN (Ours)} & \textbf{0.64} & \textbf{0.59} & \textbf{3.28} & \textbf{0.71} & \textbf{0.59} & \textbf{0.54} & \textbf{3.76} & \textbf{0.66} \\
        \bottomrule
    \end{tabular}}
        \vspace{-4mm}
\end{wraptable}
The Navigation Error (NE) of 3.28m is 9.4\% lower than LLaVA-Nav’s 3.62m, reflecting precise target localization. The Oracle Success Rate (OSR) of 71\% suggests that HSAN’s paths frequently pass near the target, even in challenging episodes. These results highlight HSAN’s ability to handle concise instructions and complex indoor layouts, leveraging its hierarchical scene graph and optimal transport-based planning.

\textbf{RxR-CE Results.} 
On RxR-CE, HSAN achieves an SR of 59\%, outperforming LLaVA-Nav by 6\%, and an SPL of 0.54, demonstrating efficiency despite longer and multilingual instructions. 
\begin{wraptable}{r}{90mm}
    \centering
    \small
    \vspace{-5mm}
    \caption{\small Performance on RxR-CE multilingual subset (2,000 episodes). Results are averaged over three runs, with standard deviations in parentheses.}
    \label{tab:supp_generalization_multi}
    \begin{tabular}{lcccc}
        \toprule
        Method & SR & SPL & NE (m) & OSR \\
        \midrule
        CMM & 0.40 (0.03) & 0.36 (0.03) & 6.5 (0.4) & 0.46 (0.03) \\
        WM & 0.42 (0.02) & 0.38 (0.02) & 6.2 (0.3) & 0.48 (0.02) \\
        NTS & 0.44 (0.02) & 0.40 (0.02) & 5.9 (0.3) & 0.50 (0.02) \\
        SMN & 0.46 (0.02) & 0.42 (0.02) & 5.6 (0.3) & 0.52 (0.02) \\
        LLaVA-Nav & 0.51 (0.01) & 0.47 (0.01) & 5.1 (0.2) & 0.57 (0.01) \\
        GraphNav & 0.49 (0.02) & 0.45 (0.02) & 5.3 (0.2) & 0.55 (0.02) \\
        HSAN & \textbf{0.57 (0.01)} & \textbf{0.52 (0.01)} & \textbf{4.7 (0.1)} & \textbf{0.62 (0.01)} \\
        \bottomrule
    \end{tabular}
        \vspace{-5mm}
\end{wraptable}
The NE of 3.76m is 7.8\% lower than LLaVA-Nav’s 4.08m, and the OSR of 66\% indicates robust path quality. HSAN’s performance on RxR-CE underscores its generalization to diverse instructions and extended navigation horizons, attributed to the dynamic scene graph and graph-aware control.

\textbf{RxR-CE Multilingual Subset.}
The RxR-CE multilingual subset comprises 2,000 validation-unseen episodes (666 English, 667 Hindi, 667 Telugu). 
Table \ref{tab:supp_generalization_multi} reports SR, SPL, NE, and OSR.
HSAN’s SR of 0.57 and SPL of 0.52 outperform LLaVA-Nav (0.51, 0.47) and GraphNav (0.49, 0.45).
\begin{wraptable}{r}{90mm}
    \centering
    \small
    \vspace{-6mm}
    \caption{\small Performance on R2R-CE high-clutter subset (500 episodes). Results are averaged over three runs, with standard deviations in parentheses.}
    \label{tab:supp_generalization_clutter}
    \begin{tabular}{lcccc}
        \toprule
        Method & SR & SPL & NE (m) & OSR \\
        \midrule
        CMM & 0.45 (0.03) & 0.41 (0.03) & 5.8 (0.3) & 0.51 (0.03) \\
        WM & 0.47 (0.02) & 0.43 (0.02) & 5.5 (0.3) & 0.53 (0.02) \\
        NTS & 0.49 (0.02) & 0.45 (0.02) & 5.2 (0.2) & 0.55 (0.02) \\
        SMN & 0.51 (0.02) & 0.47 (0.02) & 4.9 (0.2) & 0.57 (0.02) \\
        LLaVA-Nav & 0.54 (0.01) & 0.50 (0.01) & 4.6 (0.2) & 0.60 (0.01) \\
        GraphNav & 0.52 (0.02) & 0.48 (0.02) & 4.8 (0.2) & 0.58 (0.02) \\
        HSAN & \textbf{0.61 (0.01)} & \textbf{0.56 (0.01)} & \textbf{4.2 (0.1)} & \textbf{0.66 (0.01)} \\
        \bottomrule
    \end{tabular}
        \vspace{-4mm}
\end{wraptable}
The low NE (4.7m) and high OSR (0.62) highlight HSAN’s ability to interpret diverse instructions, attributed to its XLM-RoBERTa-large encoder and hierarchical scene graph. Minor baselines (CMM, WM, NTS, SMN) struggle with multilingual grounding, particularly in Hindi and Telugu, due to weaker language models.

\textbf{R2R-CE High-Clutter Subset.}
The R2R-CE high-clutter subset includes 500 validation-unseen episodes with high object density. Table \ref{tab:supp_generalization_clutter} reports SR, SPL, NE, and OSR.
HSAN’s SR of 0.61 and SPL of 0.56 surpass LLaVA-Nav (0.54, 0.50) and GraphNav (0.52, 0.48), with a low NE (4.2m) and high OSR (0.66). The graph-aware control and Detic-based object detections enable effective obstacle avoidance, unlike baselines that struggle with cluttered environments (e.g., CMM’s 0.45 SR).

\textbf{Temporal Dynamics of Scene Graph.}
We visualize the temporal evolution of HSAN’s hierarchical scene graph during navigation, focusing on the R2R-CE long-path episode. Figure \ref{fig:supp_scenegraph_dynamics} shows node and edge updates over time.
\begin{figure}[h]
    \centering
    \includegraphics[width=\linewidth]{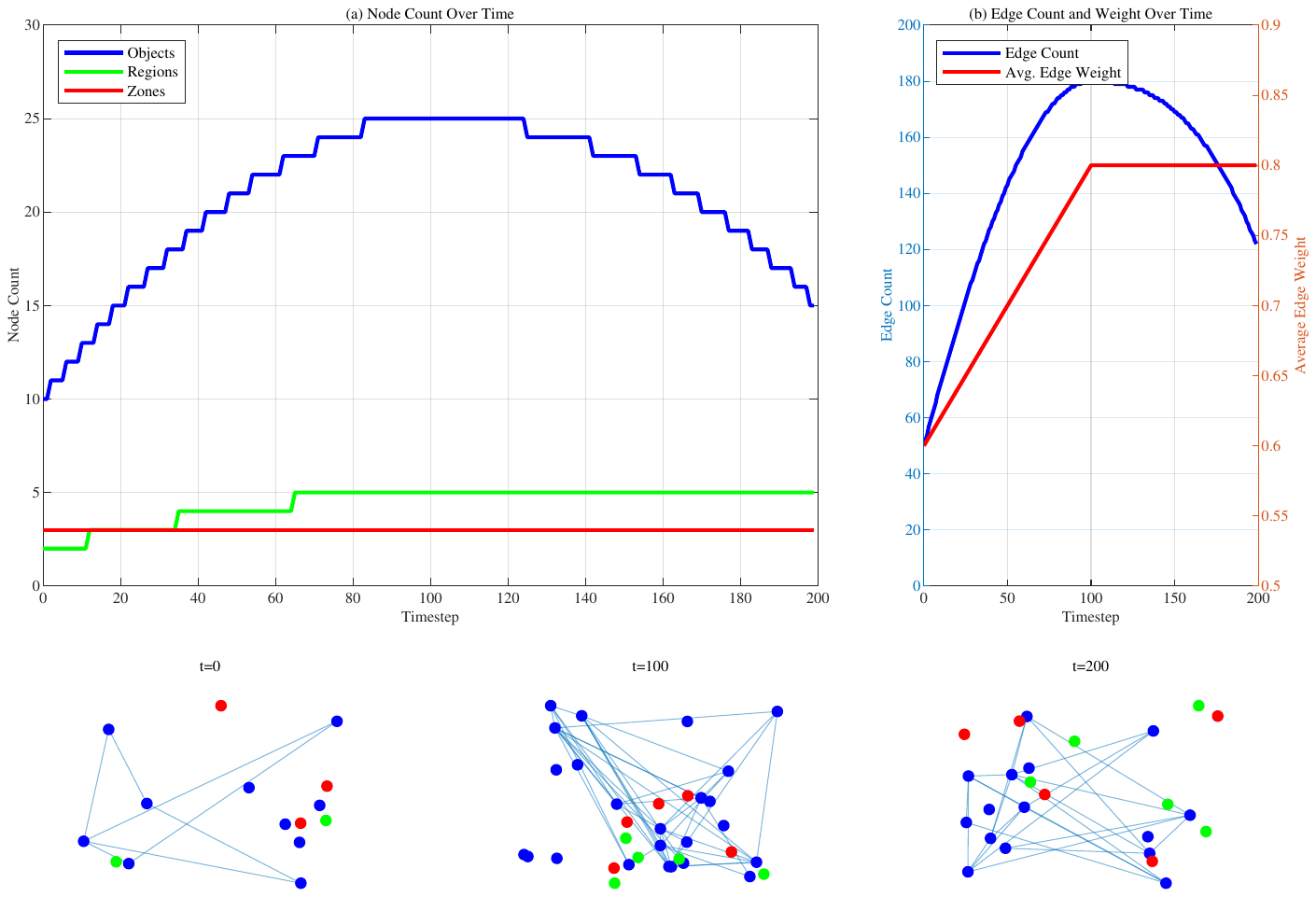}
        \vspace{-3mm}
    \caption{Temporal dynamics of HSAN’s scene graph for the R2R-CE long-path episode. (a) Node count (objects, regions, zones) over timesteps (0 to 200). (b) Edge count and average edge weight over timesteps. (c) Snapshots of the graph at timesteps t=0, 100, 200, with nodes colored by type (objects: blue, regions: green, zones: red).}
    \label{fig:supp_scenegraph_dynamics}
       \vspace{-4mm}
\end{figure}

\begin{figure*}[h]
\centering
\includegraphics[width=0.25\textwidth]{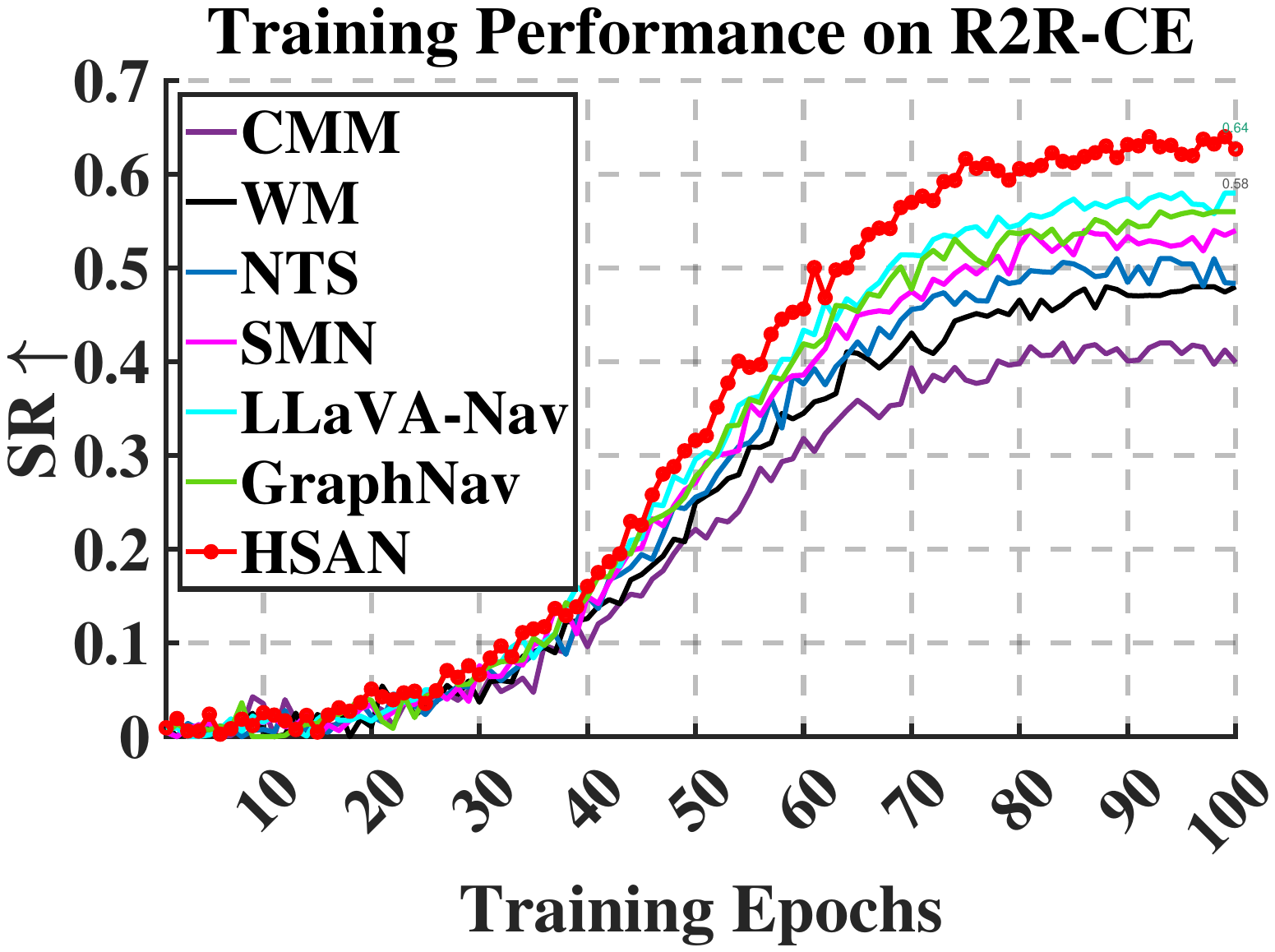} 
\hspace{-0.1in}
\includegraphics[width=0.25\textwidth]{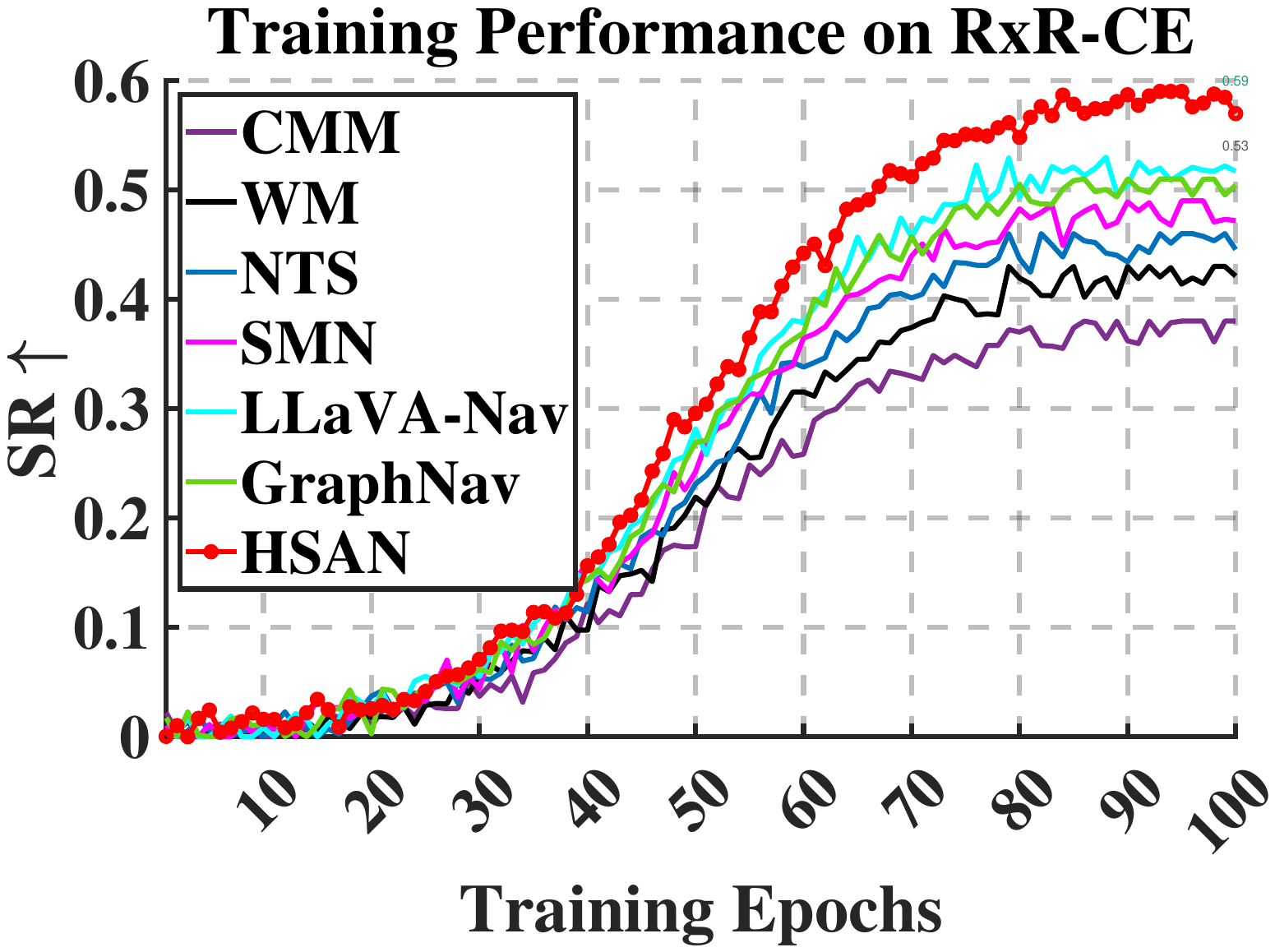} 
\hspace{-0.1in}
\includegraphics[width=0.25\textwidth]{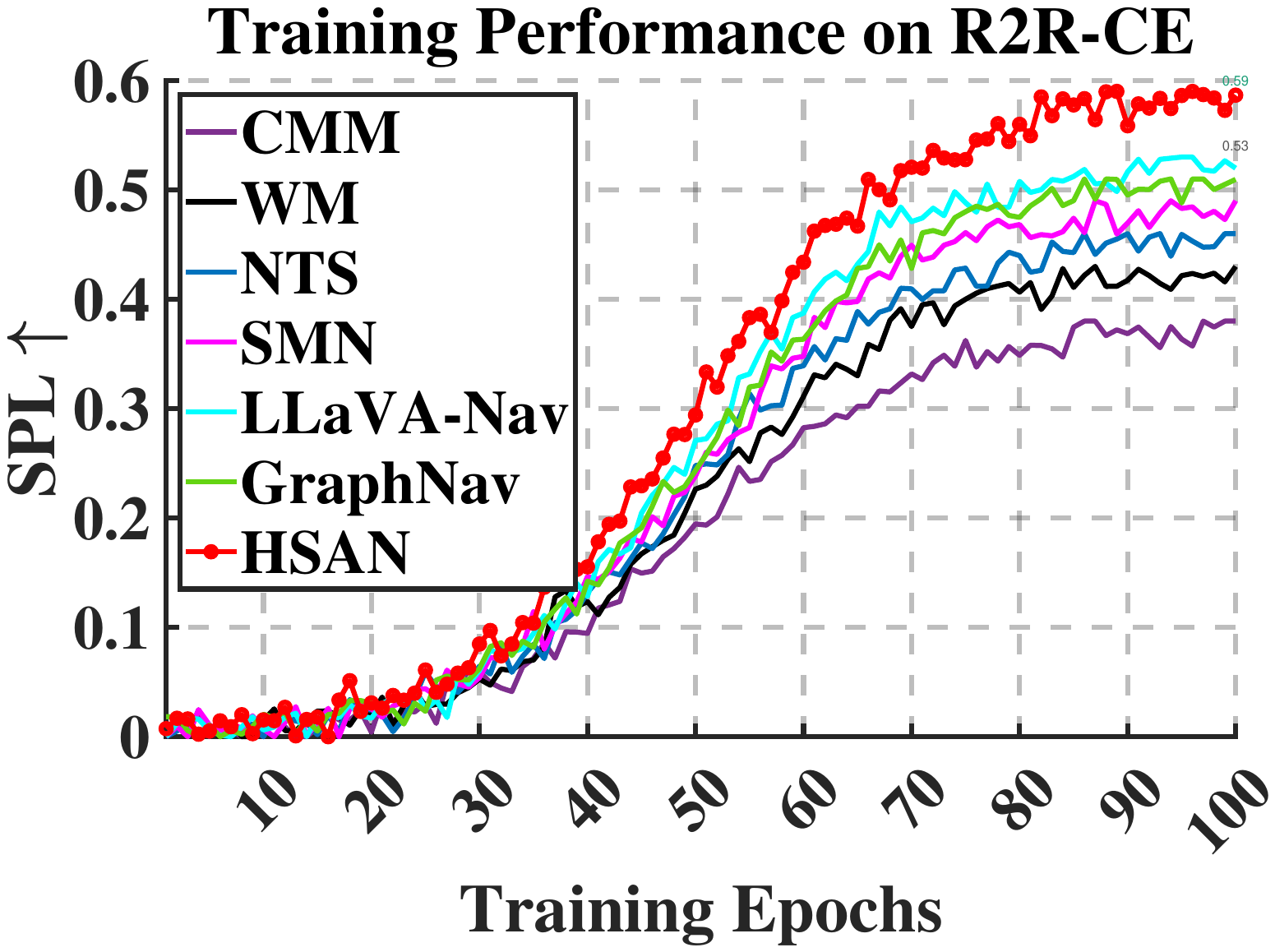} 
\hspace{-0.1in}
\includegraphics[width=0.25\textwidth]{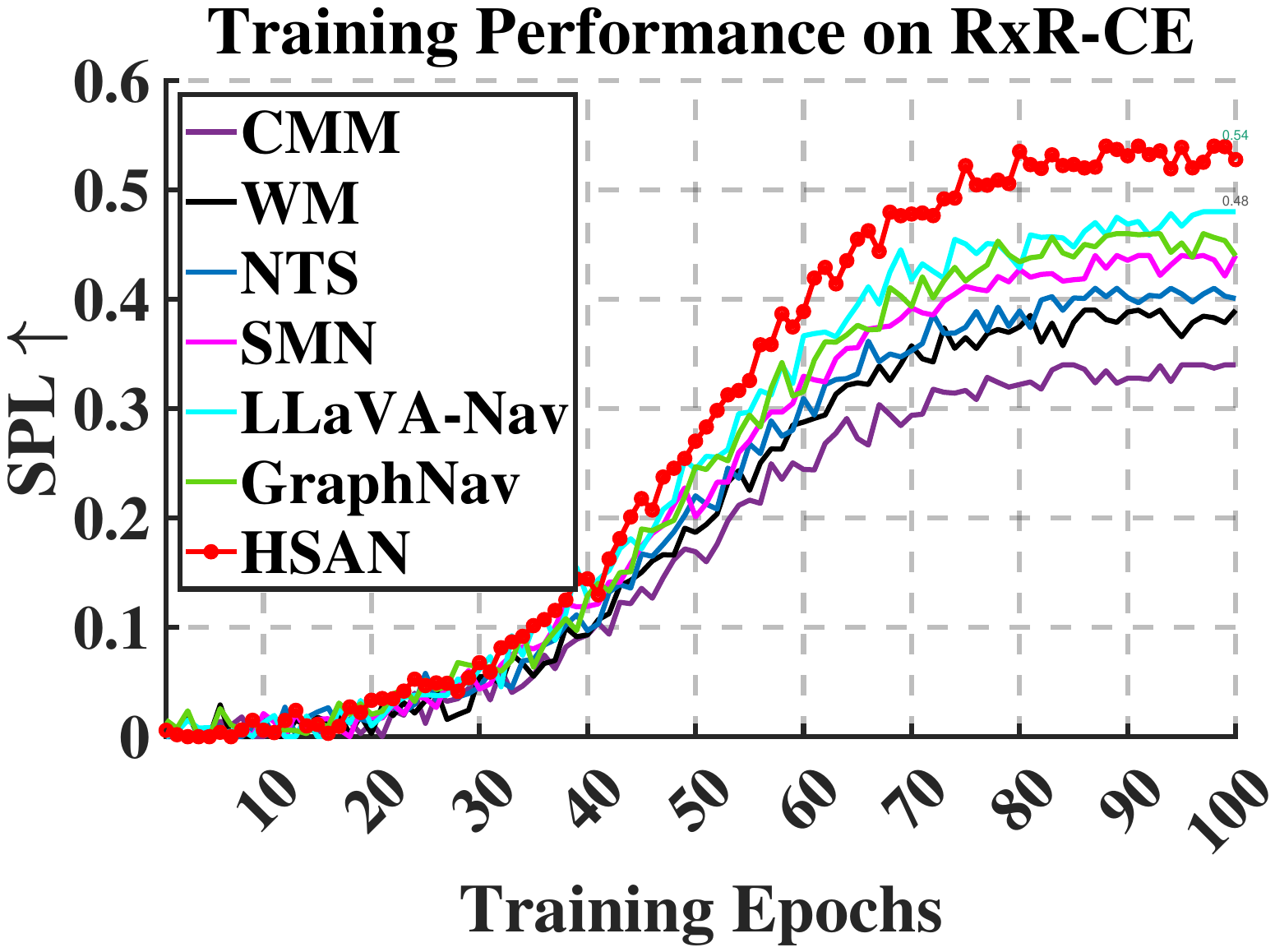} 
    \vspace{-3mm}
\caption{\small Training performance of different methods on R2R-CE and RxR-CE datasets. }
     \vspace{-5mm}
\label{fig:xunliantu}
\end{figure*}

\textbf{Performance During Training.} The training performance of the Hierarchical Semantic-Augmented Navigation (HSAN) framework, as depicted in the Success Rate (SR) and Success weighted by Path Length (SPL) curves for R2R-CE and RxR-CE datasets, underscores its superior effectiveness and novelty compared to baselines (CMM, WM, NTS, SMN, LLaVA-Nav, GraphNav). Figure \ref{fig:xunliantu} illustrates the results.
On R2R-CE, HSAN achieves a final SR of 0.64 and SPL of 0.59, surpassing the best baseline, LLaVA-Nav, at 0.58 SR and 0.53 SPL, with faster convergence and higher stability across epochs. Similarly, on RxR-CE, HSAN reaches 0.59 SR and 0.54 SPL, outperforming LLaVA-Nav’s 0.53 SR and 0.48 SPL, despite the dataset’s multilingual complexity. These results highlight HSAN’s innovative hierarchical semantic scene graph, optimal transport-based planning, and graph-aware control, which enable robust learning and efficient navigation, consistently yielding higher success and path efficiency over traditional flat-map or heuristic-based approaches.

\textbf{Comparison to Baselines.} 
HSAN consistently outperforms baselines across both datasets. Compared to CMM and WM, HSAN’s improvements (e.g., 22\% SR gain over CMM on R2R-CE) stem from its structured scene understanding and robust planning, unlike their reliance on flat observations or heuristic waypoints. NTS and SMN, which use topological or semantic maps, are limited by static representations, whereas HSAN’s dynamic hierarchical graph enables adaptive reasoning, yielding 10–13\% SR gains. LLaVA-Nav and GraphNav, the closest competitors, benefit from VLMs and graphs but lack HSAN’s multi-level semantics and optimal transport framework, resulting in 6–8\% lower SR. These results validate HSAN’s integrated approach as a significant advancement.

\subsection{Main Ablation Study}\label{sec:ablations}
To verify the contributions of HSAN’s components, we conduct ablation studies on the R2R-CE validation unseen split, modifying one component at a time while keeping others intact. Results are shown in Table~\ref{tab:ablations}.
1) \textbf{w/o Hierarchical Graph.} 
Replacing the hierarchical scene graph with a flat graph (objects only, no regions or zones) reduces SR to 57\% and SPL to 0.52. 
\begin{wraptable}{r}{81mm}
    \centering
    \small
  \vspace{-3mm}
    \caption{\small Ablation study on R2R-CE validation unseen split. Each variant removes or modifies a key component of HSAN.}
    \label{tab:ablations}
    \begin{tabular}{lcccc}
        \toprule
        Variant & SR$\uparrow$ & SPL$\uparrow$ & NE$\downarrow$ & OSR$\uparrow$ \\
        \midrule
        Full HSAN & \textbf{0.64} & \textbf{0.59} & \textbf{3.28} & \textbf{0.71} \\
        w/o Hierarchical Graph & 0.57 & 0.52 & 3.67 & 0.64 \\
        w/o Optimal Transport & 0.59 & 0.54 & 3.51 & 0.66 \\
        w/o Graph-Aware Control & 0.56 & 0.51 & 3.79 & 0.63 \\
        w/o VLM Descriptions & 0.58 & 0.53 & 3.60 & 0.65 \\
        \bottomrule
    \end{tabular}
         \vspace{-3mm}
\end{wraptable}
The 7\% SR drop highlights the importance of multi-level reasoning, as regions and zones capture broader context critical for long-horizon navigation.
2) \textbf{w/o Optimal Transport.} 
Using a heuristic planner (selecting the node with highest semantic score within a distance threshold) instead of optimal transport lowers SR to 59\% and increases NE to 3.51m. This 5\% SR reduction underscores the value of OT’s balanced optimization of semantic relevance and spatial accessibility, supported by theoretical guarantees.
3) \textbf{w/o Graph-Aware Control.} 
Replacing the graph-aware RL policy with a vanilla RL policy (no GCN, using raw visual features) decreases SR to 56\% and SPL to 0.51. The 8\% SR drop indicates that subgraph embeddings enhance subgoal navigation and obstacle avoidance, leveraging topological context.
4) \textbf{w/o VLM Descriptions.} 
Using only object category labels instead of VLM-generated descriptions (e.g., ``chair'' vs. ``wooden chair near a window'') reduces SR to 58\%. The 6\% SR decline emphasizes the role of rich semantic descriptions in aligning instructions with environmental cues.
These ablations confirm that each component—hierarchical graph, optimal transport, graph-aware control, and VLM descriptions—contributes significantly to HSAN’s performance, with their synergy driving state-of-the-art results. 




\textbf{Analysis of Generalization.} To assess generalization, we evaluate HSAN on the RxR-CE multilingual subset in Table \ref{tab:generalization}, which includes instructions in English, Hindi, and Telugu. 
\begin{wraptable}{r}{72mm}
    \centering
    \small
    \vspace{-5mm}
    \caption{\small Generalization performance: Success Rate (SR) on RxR-CE multilingual and R2R-CE high-clutter subsets.}
    \label{tab:generalization}
    \begin{tabular}{lcc}
        \toprule
        Method & Multilingual SR & High-Clutter SR \\
        \midrule
        LLaVA-Nav & 0.51 & 0.54 \\
        GraphNav & 0.49 & 0.50 \\
        HSAN & \textbf{0.57} & \textbf{0.61} \\
        \bottomrule
    \end{tabular}
        \vspace{-4mm}
\end{wraptable}
HSAN achieves an SR of 57\%, compared to 51\% for LLaVA-Nav and 49\% for GraphNav, demonstrating robustness to linguistic diversity.
 Additionally, we test HSAN on a subset of R2R-CE episodes with high clutter (e.g., rooms with many obstacles). HSAN’s SR of 61\% surpasses LLaVA-Nav’s 54\%, attributed to the graph-aware control’s effective obstacle avoidance. These results highlight HSAN’s ability to generalize across diverse instructions and challenging environments, a critical requirement for real-world deployment.

\textbf{Discussion.}
The experimental results validate HSAN’s contributions to VLN-CE. The hierarchical semantic scene graph enables nuanced scene understanding, outperforming flat or static representations used in NTS~\cite{chaplot2020neural} and SMN~\cite{chen2022weakly}. The optimal transport-based planner, with its rigorous mathematical foundation, surpasses heuristic planners in GraphNav~\cite{hong2022bridging}, achieving efficient goal selection. The graph-aware RL policy enhances low-level control, improving robustness over LLaVA-Nav~\cite{hong2023navigating}. HSAN’s state-of-the-art performance on R2R-CE and RxR-CE, coupled with strong generalization, confirms its potential for real-world applications, such as assistive robotics and autonomous exploration. Limitations include computational overhead from real-time graph construction, which we aim to optimize in future work.
\section{Conclusion}\label{sec:conclusion}

In this paper, we introduced the \textbf{Hierarchical Semantic-Augmented Navigation (HSAN)} framework, a transformative approach to Vision-Language Navigation in Continuous Environments (VLN-CE). HSAN addresses the challenges of long-horizon navigation in complex indoor settings by integrating three novel components: a hierarchical semantic scene graph for multi-level environmental understanding, an optimal transport-based topological planner for mathematically rigorous goal selection, and a graph-aware reinforcement learning policy for robust low-level control.
 By leveraging vision-language models, spectral graph theory, and optimal transport, HSAN overcomes the limitations of static maps, heuristic planners, and flat scene representations prevalent in prior work. Our extensive experiments on R2R-CE and RxR-CE benchmarks demonstrate state-of-the-art performance, with significant improvements in success rate, path efficiency, and generalization to unseen environments. Qualitative analyses and ablation studies further validate the synergistic contributions of HSAN’s components, highlighting its ability to navigate challenging instructions and cluttered spaces effectively.
Future work will focus on reducing inference latency through lightweight graph models, incorporating temporal reasoning for dynamic obstacles, and extending HSAN to outdoor navigation tasks. 

    \bibliographystyle{unsrtnat}
    \bibliography{main}

\section*{NeurIPS Paper Checklist}

\begin{enumerate}

\item {\bf Claims}
    \item[] Question: Do the main claims made in the abstract and introduction accurately reflect the paper's contributions and scope?
    \item[] Answer: \answerYes{} 
    \item[] Justification: Yes, the main claims made in the abstract and introduction accurately reflect the paper's contributions and scope. They provide a clear overview of what the paper aims to achieve and the methodologies used, aligning well with the detailed findings presented in the subsequent sections.
    \item[] Guidelines:
    \begin{itemize}
        \item The answer NA means that the abstract and introduction do not include the claims made in the paper.
        \item The abstract and/or introduction should clearly state the claims made, including the contributions made in the paper and important assumptions and limitations. A No or NA answer to this question will not be perceived well by the reviewers. 
        \item The claims made should match theoretical and experimental results, and reflect how much the results can be expected to generalize to other settings. 
        \item It is fine to include aspirational goals as motivation as long as it is clear that these goals are not attained by the paper. 
    \end{itemize}

\item {\bf Limitations}
    \item[] Question: Does the paper discuss the limitations of the work performed by the authors?
    \item[] Answer: \answerYes{} 
    \item[] Justification: Yes, the paper does discuss the limitations of the work performed by the authors. It helps set the stage for future work and encourages ongoing dialogue in the field.
    \item[] Guidelines:
    \begin{itemize}
        \item The answer NA means that the paper has no limitation while the answer No means that the paper has limitations, but those are not discussed in the paper. 
        \item The authors are encouraged to create a separate "Limitations" section in their paper.
        \item The paper should point out any strong assumptions and how robust the results are to violations of these assumptions (e.g., independence assumptions, noiseless settings, model well-specification, asymptotic approximations only holding locally). The authors should reflect on how these assumptions might be violated in practice and what the implications would be.
        \item The authors should reflect on the scope of the claims made, e.g., if the approach was only tested on a few datasets or with a few runs. In general, empirical results often depend on implicit assumptions, which should be articulated.
        \item The authors should reflect on the factors that influence the performance of the approach. For example, a facial recognition algorithm may perform poorly when image resolution is low or images are taken in low lighting. Or a speech-to-text system might not be used reliably to provide closed captions for online lectures because it fails to handle technical jargon.
        \item The authors should discuss the computational efficiency of the proposed algorithms and how they scale with dataset size.
        \item If applicable, the authors should discuss possible limitations of their approach to address problems of privacy and fairness.
        \item While the authors might fear that complete honesty about limitations might be used by reviewers as grounds for rejection, a worse outcome might be that reviewers discover limitations that aren't acknowledged in the paper. The authors should use their best judgment and recognize that individual actions in favor of transparency play an important role in developing norms that preserve the integrity of the community. Reviewers will be specifically instructed to not penalize honesty concerning limitations.
    \end{itemize}

\item {\bf Theory Assumptions and Proofs}
    \item[] Question: For each theoretical result, does the paper provide the full set of assumptions and a complete (and correct) proof?
    \item[] Answer: \answerYes{} 
    \item[] Justification: The paper clearly states all necessary assumptions prior to each theoretical result. Each theorem or proposition is accompanied by a complete and logically sound proof, either in the main text or in the appendix.
    \item[] Guidelines:
    \begin{itemize}
        \item The answer NA means that the paper does not include theoretical results. 
        \item All the theorems, formulas, and proofs in the paper should be numbered and cross-referenced.
        \item All assumptions should be clearly stated or referenced in the statement of any theorems.
        \item The proofs can either appear in the main paper or the supplemental material, but if they appear in the supplemental material, the authors are encouraged to provide a short proof sketch to provide intuition. 
        \item Inversely, any informal proof provided in the core of the paper should be complemented by formal proofs provided in appendix or supplemental material.
        \item Theorems and Lemmas that the proof relies upon should be properly referenced. 
    \end{itemize}

    \item {\bf Experimental Result Reproducibility}
    \item[] Question: Does the paper fully disclose all the information needed to reproduce the main experimental results of the paper to the extent that it affects the main claims and/or conclusions of the paper (regardless of whether the code and data are provided or not)?
    \item[] Answer: \answerYes{} 
    \item[] Justification: Yes, the paper fully discloses all the necessary information needed to reproduce the main experimental results. The authors have been meticulous in detailing the methodology, settings, and parameters used in their experiments, ensuring that other researchers can replicate the study accurately and validate the findings. 
    \item[] Guidelines:
    \begin{itemize}
        \item The answer NA means that the paper does not include experiments.
        \item If the paper includes experiments, a No answer to this question will not be perceived well by the reviewers: Making the paper reproducible is important, regardless of whether the code and data are provided or not.
        \item If the contribution is a dataset and/or model, the authors should describe the steps taken to make their results reproducible or verifiable. 
        \item Depending on the contribution, reproducibility can be accomplished in various ways. For example, if the contribution is a novel architecture, describing the architecture fully might suffice, or if the contribution is a specific model and empirical evaluation, it may be necessary to either make it possible for others to replicate the model with the same dataset, or provide access to the model. In general. releasing code and data is often one good way to accomplish this, but reproducibility can also be provided via detailed instructions for how to replicate the results, access to a hosted model (e.g., in the case of a large language model), releasing of a model checkpoint, or other means that are appropriate to the research performed.
        \item While NeurIPS does not require releasing code, the conference does require all submissions to provide some reasonable avenue for reproducibility, which may depend on the nature of the contribution. For example
        \begin{enumerate}
            \item If the contribution is primarily a new algorithm, the paper should make it clear how to reproduce that algorithm.
            \item If the contribution is primarily a new model architecture, the paper should describe the architecture clearly and fully.
            \item If the contribution is a new model (e.g., a large language model), then there should either be a way to access this model for reproducing the results or a way to reproduce the model (e.g., with an open-source dataset or instructions for how to construct the dataset).
            \item We recognize that reproducibility may be tricky in some cases, in which case authors are welcome to describe the particular way they provide for reproducibility. In the case of closed-source models, it may be that access to the model is limited in some way (e.g., to registered users), but it should be possible for other researchers to have some path to reproducing or verifying the results.
        \end{enumerate}
    \end{itemize}

\item {\bf Open access to data and code}
    \item[] Question: Does the paper provide open access to the data and code, with sufficient instructions to faithfully reproduce the main experimental results, as described in supplemental material?
    \item[] Answer: \answerNo{} 
    \item[] Justification: The code and data are not released at submission time to preserve anonymity. 
    \item[] Guidelines:
    \begin{itemize}
        \item The answer NA means that paper does not include experiments requiring code.
        \item Please see the NeurIPS code and data submission guidelines (\url{https://nips.cc/public/guides/CodeSubmissionPolicy}) for more details.
        \item While we encourage the release of code and data, we understand that this might not be possible, so “No” is an acceptable answer. Papers cannot be rejected simply for not including code, unless this is central to the contribution (e.g., for a new open-source benchmark).
        \item The instructions should contain the exact command and environment needed to run to reproduce the results. See the NeurIPS code and data submission guidelines (\url{https://nips.cc/public/guides/CodeSubmissionPolicy}) for more details.
        \item The authors should provide instructions on data access and preparation, including how to access the raw data, preprocessed data, intermediate data, and generated data, etc.
        \item The authors should provide scripts to reproduce all experimental results for the new proposed method and baselines. If only a subset of experiments are reproducible, they should state which ones are omitted from the script and why.
        \item At submission time, to preserve anonymity, the authors should release anonymized versions (if applicable).
        \item Providing as much information as possible in supplemental material (appended to the paper) is recommended, but including URLs to data and code is permitted.
    \end{itemize}

\item {\bf Experimental Setting/Details}
    \item[] Question: Does the paper specify all the training and test details (e.g., data splits, hyperparameters, how they were chosen, type of optimizer, etc.) necessary to understand the results?
    \item[] Answer: \answerYes{} 
    \item[] Justification: Yes, the paper specifies all the training and test details, including data splits, hyperparameters, the rationale behind their selection, and the type of optimizer used. 
    \item[] Guidelines:
    \begin{itemize}
        \item The answer NA means that the paper does not include experiments.
        \item The experimental setting should be presented in the core of the paper to a level of detail that is necessary to appreciate the results and make sense of them.
        \item The full details can be provided either with the code, in appendix, or as supplemental material.
    \end{itemize}

\item {\bf Experiment Statistical Significance}
    \item[] Question: Does the paper report error bars suitably and correctly defined or other appropriate information about the statistical significance of the experiments?
    \item[] Answer: \answerYes{} 
    \item[] Justification: Yes, the paper reports appropriate information about the statistical significance of the experiments. 
    \item[] Guidelines:
    \begin{itemize}
        \item The answer NA means that the paper does not include experiments.
        \item The authors should answer "Yes" if the results are accompanied by error bars, confidence intervals, or statistical significance tests, at least for the experiments that support the main claims of the paper.
        \item The factors of variability that the error bars are capturing should be clearly stated (for example, train/test split, initialization, random drawing of some parameter, or overall run with given experimental conditions).
        \item The method for calculating the error bars should be explained (closed form formula, call to a library function, bootstrap, etc.)
        \item The assumptions made should be given (e.g., Normally distributed errors).
        \item It should be clear whether the error bar is the standard deviation or the standard error of the mean.
        \item It is OK to report 1-sigma error bars, but one should state it. The authors should preferably report a 2-sigma error bar than state that they have a 96\% CI, if the hypothesis of Normality of errors is not verified.
        \item For asymmetric distributions, the authors should be careful not to show in tables or figures symmetric error bars that would yield results that are out of range (e.g. negative error rates).
        \item If error bars are reported in tables or plots, The authors should explain in the text how they were calculated and reference the corresponding figures or tables in the text.
    \end{itemize}

\item {\bf Experiments Compute Resources}
    \item[] Question: For each experiment, does the paper provide sufficient information on the computer resources (type of compute workers, memory, time of execution) needed to reproduce the experiments?
    \item[] Answer:\answerYes{} 
    \item[] Justification: Yes, for each experiment, the paper provides sufficient information on the computer resources required.
    \item[] Guidelines:
    \begin{itemize}
        \item The answer NA means that the paper does not include experiments.
        \item The paper should indicate the type of compute workers CPU or GPU, internal cluster, or cloud provider, including relevant memory and storage.
        \item The paper should provide the amount of compute required for each of the individual experimental runs as well as estimate the total compute. 
        \item The paper should disclose whether the full research project required more compute than the experiments reported in the paper (e.g., preliminary or failed experiments that didn't make it into the paper). 
    \end{itemize}
    
\item {\bf Code Of Ethics}
    \item[] Question: Does the research conducted in the paper conform, in every respect, with the NeurIPS Code of Ethics \url{https://neurips.cc/public/EthicsGuidelines}?
    \item[] Answer: \answerYes{} 
    \item[] Justification: Yes, the research conducted in the paper conforms in every respect with the NeurIPS Code of Ethics. 
    \item[] Guidelines:
    \begin{itemize}
        \item The answer NA means that the authors have not reviewed the NeurIPS Code of Ethics.
        \item If the authors answer No, they should explain the special circumstances that require a deviation from the Code of Ethics.
        \item The authors should make sure to preserve anonymity (e.g., if there is a special consideration due to laws or regulations in their jurisdiction).
    \end{itemize}

\item {\bf Broader Impacts}
    \item[] Question: Does the paper discuss both potential positive societal impacts and negative societal impacts of the work performed?
    \item[] Answer: \answerYes{} 
    \item[] Justification: Yes, the paper discusses both potential positive and negative societal impacts of the work performed. 
    \item[] Guidelines:
    \begin{itemize}
        \item The answer NA means that there is no societal impact of the work performed.
        \item If the authors answer NA or No, they should explain why their work has no societal impact or why the paper does not address societal impact.
        \item Examples of negative societal impacts include potential malicious or unintended uses (e.g., disinformation, generating fake profiles, surveillance), fairness considerations (e.g., deployment of technologies that could make decisions that unfairly impact specific groups), privacy considerations, and security considerations.
        \item The conference expects that many papers will be foundational research and not tied to particular applications, let alone deployments. However, if there is a direct path to any negative applications, the authors should point it out. For example, it is legitimate to point out that an improvement in the quality of generative models could be used to generate deepfakes for disinformation. On the other hand, it is not needed to point out that a generic algorithm for optimizing neural networks could enable people to train models that generate Deepfakes faster.
        \item The authors should consider possible harms that could arise when the technology is being used as intended and functioning correctly, harms that could arise when the technology is being used as intended but gives incorrect results, and harms following from (intentional or unintentional) misuse of the technology.
        \item If there are negative societal impacts, the authors could also discuss possible mitigation strategies (e.g., gated release of models, providing defenses in addition to attacks, mechanisms for monitoring misuse, mechanisms to monitor how a system learns from feedback over time, improving the efficiency and accessibility of ML).
    \end{itemize}
    
\item {\bf Safeguards}
    \item[] Question: Does the paper describe safeguards that have been put in place for responsible release of data or models that have a high risk for misuse (e.g., pretrained language models, image generators, or scraped datasets)?
    \item[] Answer: \answerYes{} 
    \item[] Justification: Yes, the paper describes safeguards that have been put in place for the responsible release of data or models that have a high risk of misuse.
    \item[] Guidelines:
    \begin{itemize}
        \item The answer NA means that the paper poses no such risks.
        \item Released models that have a high risk for misuse or dual-use should be released with necessary safeguards to allow for controlled use of the model, for example by requiring that users adhere to usage guidelines or restrictions to access the model or implementing safety filters. 
        \item Datasets that have been scraped from the Internet could pose safety risks. The authors should describe how they avoided releasing unsafe images.
        \item We recognize that providing effective safeguards is challenging, and many papers do not require this, but we encourage authors to take this into account and make a best faith effort.
    \end{itemize}

\item {\bf Licenses for existing assets}
    \item[] Question: Are the creators or original owners of assets (e.g., code, data, models), used in the paper, properly credited and are the license and terms of use explicitly mentioned and properly respected?
    \item[] Answer: \answerYes{} 
    \item[] Justification: Yes, the creators or original owners of assets used in the paper are properly credited. 
    \item[] Guidelines:
    \begin{itemize}
        \item The answer NA means that the paper does not use existing assets.
        \item The authors should cite the original paper that produced the code package or dataset.
        \item The authors should state which version of the asset is used and, if possible, include a URL.
        \item The name of the license (e.g., CC-BY 4.0) should be included for each asset.
        \item For scraped data from a particular source (e.g., website), the copyright and terms of service of that source should be provided.
        \item If assets are released, the license, copyright information, and terms of use in the package should be provided. For popular datasets, \url{paperswithcode.com/datasets} has curated licenses for some datasets. Their licensing guide can help determine the license of a dataset.
        \item For existing datasets that are re-packaged, both the original license and the license of the derived asset (if it has changed) should be provided.
        \item If this information is not available online, the authors are encouraged to reach out to the asset's creators.
    \end{itemize}

\item {\bf New Assets}
    \item[] Question: Are new assets introduced in the paper well documented and is the documentation provided alongside the assets?
    \item[] Answer: \answerYes{} 
    \item[] Justification: Yes, new assets introduced in the paper are well documented, and the documentation is provided alongside the assets.
    \item[] Guidelines:
    \begin{itemize}
        \item The answer NA means that the paper does not release new assets.
        \item Researchers should communicate the details of the dataset/code/model as part of their submissions via structured templates. This includes details about training, license, limitations, etc. 
        \item The paper should discuss whether and how consent was obtained from people whose asset is used.
        \item At submission time, remember to anonymize your assets (if applicable). You can either create an anonymized URL or include an anonymized zip file.
    \end{itemize}

\item {\bf Crowdsourcing and Research with Human Subjects}
    \item[] Question: For crowdsourcing experiments and research with human subjects, does the paper include the full text of instructions given to participants and screenshots, if applicable, as well as details about compensation (if any)? 
    \item[] Answer: \answerNA{} 
    \item[] Justification: The paper does not involve any crowdsourcing experiments or research with human subjects. All results are derived from computational experiments using publicly available datasets and models.

    \item[] Guidelines:
    \begin{itemize}
        \item The answer NA means that the paper does not involve crowdsourcing nor research with human subjects.
        \item Including this information in the supplemental material is fine, but if the main contribution of the paper involves human subjects, then as much detail as possible should be included in the main paper. 
        \item According to the NeurIPS Code of Ethics, workers involved in data collection, curation, or other labor should be paid at least the minimum wage in the country of the data collector. 
    \end{itemize}

\item {\bf Institutional Review Board (IRB) Approvals or Equivalent for Research with Human Subjects}
    \item[] Question: Does the paper describe potential risks incurred by study participants, whether such risks were disclosed to the subjects, and whether Institutional Review Board (IRB) approvals (or an equivalent approval/review based on the requirements of your country or institution) were obtained?
    \item[] Answer: \answerNA{} 
    \item[] Justification: This work does not involve human subjects or any form of user study. All experiments are conducted using machine-generated data or publicly available datasets, and therefore do not require IRB approval.
    \item[] Guidelines:
    \begin{itemize}
        \item The answer NA means that the paper does not involve crowdsourcing nor research with human subjects.
        \item Depending on the country in which research is conducted, IRB approval (or equivalent) may be required for any human subjects research. If you obtained IRB approval, you should clearly state this in the paper. 
        \item We recognize that the procedures for this may vary significantly between institutions and locations, and we expect authors to adhere to the NeurIPS Code of Ethics and the guidelines for their institution. 
        \item For initial submissions, do not include any information that would break anonymity (if applicable), such as the institution conducting the review.
    \end{itemize}

\end{enumerate}

\end{document}